\newtheorem{definition}{Definition}
\newtheorem{theorem}{Theorem}
\def\a{\mathbf a}
\def\w{\mathbf w}
\def\a{\mathbf a}
\def\q{\mathbf q}
\def\s{\mathbf s}
\def\w{\mathbf w}
\begin{document}


\title{On ``Deep Learning'' Misconduct}

\author{\authorname{Juyang Weng$^{1,2,a}$}
	\affiliation{$^1$Brain-Mind Institute, 4460 Alderwood Dr. Okemos, MI 48864 USA}
	\affiliation{$^2$GENISAMA}
	\email{juyang.weng@gmail.com, jweng@genisama.com}
}

\keywords{Neural Networks, Machine Learning, Error Backprop, Deep Learning, Misconduct, Hiding, Cheating, ImageNet Competitions, AlphaGo Competitions.}

\abstract{This is a theoretical paper, as a companion paper of the plenary talk for the same conference ISAIC 2022. In contrast to the author's plenary talk in the same conference, conscious learning \cite{WengCLICCE22,WengCLAIEE22}, which develops a single network for a life (many tasks), ``Deep Learning'' trains multiple networks for each task.  Although ``Deep Learning''  may use different learning modes, including supervised, reinforcement and adversarial modes, almost all ``Deep Learning'' projects apparently suffer from the same misconduct, called ``data deletion'' and ``test on training data''.  This paper establishes a theorem that a simple method called Pure-Guess Nearest Neighbor (PGNN) reaches any required errors on validation data set and test data set, including zero-error requirements, through the same misconduct, as long as the test data set is in the possession of the authors and both the amount of storage space and the time of training are finite but unbounded.  The misconduct violates well-known protocols called transparency and cross-validation.   The nature of the misconduct is fatal, because in the absence of any disjoint test, ``Deep Learning'' is clearly not generalizable.}
\onecolumn \maketitle \normalsize \vfill


\section{\uppercase{Introduction}}
\label{SE:intro}

\noindent The problem addressed is the widespread so-called ``Deep Learning'' method---training neural networks using error-backprop.  The objective is to scientifically reason that the so-called ``Deep Learning'' contains fatal misconduct.    This paper reasons that ``Deep Learning'' was not tested by a disjoint test data set at all.  Why?  The so-called ``test data set'' was used in the Post-Selection step of the training stage.

Since around 2015 \cite{Russakovsky15}, there has been an ``explosion'' of AI papers, observed by many conferences and journals.  Many publication venues rejected many papers based on superficial reasons like topic scope, instead of the deeper reasons here that might explain the ``explosion''.  The ``explosion'' does not mean that such publication venues are of high quality (with an elevated rejection rate).  The author hypothesizes that the ``explosion'' is related to the widespread lack of awareness about the misconduct.   

Projects that apparently embed such misconducts include, but not limited to, AlexNet \cite{Krizhevsky17}, AlphaGo Zero \cite{Silver17}, AlphaZero \cite{Silver18}, AlphaFold \cite{Senior20}, MuZero \cite{Schrittwieser20}, 
and IBM Debater \cite{Slonim21}.  For open competitions with AlphaGo \cite{Silver16}, this author alleged that humans
did post-selections from multiple AlphaGo networks on the fly when test data were arriving from Lee Sedol or Ke Jie  \cite{WengMisleadAIEE23}.   More recent citations are in the author's misconduct reports submitted to {\em Nature} \cite{WengNatureReport21} and {\em Science} \cite{WengScienceReport21}, respectively.

Two misconducts are implicated with so-called ``Deep Learning'': 
\begin{description}
\item[Misconduct 1:] {\em hiding} data—the human authors hid data that look bad.   
\item[Misconduct 2:]  {\em cheating} through a test on training data—the human authors tested on training data but miscalled the reported data as ``test''. 
\end{description}
The nature of Misconduct 1 is hiding.  The nature of the Misconduct 2 is cheating.   They are the natures of actions from the authors, regardless of whether the authors intended to hide and cheat or not.   

Without a reasonable possibility to prove what was in the mind of the authors, the author does not claim that the questioned authors intentionally hid and cheated when they conducted such misconduct. 

The following analogy about the two misconducts is a simpler version in layman’s terms.  The so-called ``Deep Learning'' practice is like in a lottery scheme, a winner of a lottery ticket reports that his ``technique'' that provides a set of numbers on his lottery ticket has won 1 million dollars (Misconduct 2, since the winner has been picked up by a lucky chance after lottery drawing is finished), but he does not report how many lottery tickets he and others have tried and what is the average prize per ticket across all the lottery tickets that have tried (Misconduct 1, since he hid other lottery tickets except the luckiest ticket).  His ``technique'' will unlikely win in the next round of the lottery drawing (his ``technique'' is non-generalizable).

In the remainder of the paper, we will discuss four learning conditions in Section \ref{SE:Conditions} from which we can see that we cannot just look at superficial ``errors'' without limiting resources. 
Section~\ref{SE:4Maps} discusses four types of mappings for a learner, which gives spaces on which we can 
discuss errors. 
Post-Selections are discussed in Section~\ref{SE:Post}.  Section~\ref{SE:conclusions} provides concluding remarks.

\section{\uppercase{Four Learning  Conditions}}
\label{SE:Conditions}

Often, artificial intelligence (AI) methods were evaluated without considering how much computational resources are 
necessary for the development of a reported system.   Thus, comparisons about the performance of the systems have been biased toward competitions about how much resources a group has at its disposal, regardless how many networks have been trained and discarded, and how much time the training takes.

By definition, the Four Learning  Conditions for developing an AI system are:  (1) A body including sensors and effectors, (2) a set of restrictions of learning framework, including whether task-specific or task-nonspecific, batch learning or incremental learning; (3) a training experience and (4) a limited amount of computational resources including the number of hidden neurons.

\section{\uppercase{Four Mappings}}
\label{SE:4Maps}

Traditionally, a neural network is meant to establish a mapping $f$ from the space of input $X$ to the space of
class labels $L$, 
\begin{equation}
f:X \mapsto L
\label{EQ:XtoLmapping}
\end{equation}
\cite{Funahashi89,Poggio90a}.   $X$ may contain a few time frames.  

For temporal problems, such as video analysis problems, speech recognition problems, and computer game-play problems, we can include context labels in the input space, so as to learn a mapping 
\begin{equation}
f:X \times L \mapsto L.
\label{EQ:XLtoLmapping}
\end{equation}
where $\times$ denotes the Cartesian product of sets. 

The developmental approach deals with space and time in a unified fashion using a neural network such as Developmental Networks (DNs) \cite{WengWhy11} 
whose experimental embodiments range from Where-What Network WWN-1 to Where-What Network WWN-9.  The DNs went beyond vision problems to attack general AI problems including vision, audition, and natural language acquisition as emergent Turing machines \cite{WengIJIS15}.
DNs overcome the limitations of the framewise mapping in Eq.~\eqref{EQ:XLtoLmapping} by dealing with lifetime mapping without using any symbolic labels:
 \begin{equation}
f: X(t-1)\times Z(t-1) \mapsto  Z(t), t=1, 2, ... 
\label{EQ:XZmapping}
\end{equation}
where $X(t)$ and $Z(t)$ are the sensory input space and motor input-output space, respectively.

Consider space: Because $X$ and $Z$ are vector spaces of sensory images and muscle neurons, we need internal 
neuronal feature space $Y$ to deal with sub-vectors in $X$, $Z$ and their hierarchical features.  

Consider time: Furthermore, considering symbolic 
Markov models, we also need further to model how $Y$-to-$Y$ connections
enable something similar to higher and dynamic order of time in Markov models.  With the two considerations Space and Time, the above lifetime mapping in Eq.~\eqref{EQ:XZmapping} is extended to:
\begin{equation}
f: X(t-1)\times Y(t-1) \times Z(t-1) \mapsto  Y(t) \times Z(t),
\label{EQ:XYZmapping}
\end{equation} 
$ t=1, 2, ... $ in DN-2.  It is worth noting that the $Y$ space is inside a closed ``skull'' so it cannot be directly supervised. 
$Z(t-1)$ here is extremely important since it corresponds to the state of an emergent Turing machine.  

In performance evaluation of the developmental approach, all the errors occurring during any time in Eq.~\eqref{EQ:XYZmapping} of each life are 
recorded and taken into account in the performance evaluation.  This is in sharp contrast with, and free from, Post-Selection.  

\section{\uppercase{Post-Selections}}
\label{SE:Post}

\begin{definition}[Training and test stages]
\label{DF:2stages}
Suppose that the development of a classification system $f$ is divided into two stages, a training stage and a test stage, where the training stage must provide a completely trained system so that given any new input $\q$, not in the possession of the human trainer, the trained system must provide top-$m$ labels in $L$ (e.g., $m=5$ for top 5) for the input $q$.
\end{definition}

Let us discuss three types of errors.

\subsection{Fitting Errors}

Given an available data set $D$, $D$ is partitioned into three mutually disjoint sets, a fitting set $F$, a validation set $V$ (like a mock exam), and a test set $T$ so that
\begin{equation}
D=F\cup V \cup T.
\label{EQ:disjoint}
\end{equation}
Two sets are disjoint if they do not share any elements.   
The validation set is possessed by the trainer, but the test set should not be possessed by the trainer since the test should be conducted by an independent agency.  Otherwise, $V$ and $T$ become equivalent. 

Typically, we do not know the hyper-parameter vector $\a$ (e.g., including receptive fields of neurons), thus the so-called ``Deep Learning'' technique searches for $\a$ as $\a_i$, $i=1, 2, ... k$.  Given any hyper-parameter vector $\a_i$, it is unlikely that a single network initialized by a set of random weight vectors $\w_j$ can 
result in an acceptable error rate on the fitting set, called fitting error.   The error-backprop  training intends to minimize the error locally along the gradient direction of $\w_j$.
That is how the multiple sets of random weight hyper-parameter vectors come in.  For $k$ hyper-parameter vectors $\a_i$, $i=1, 2, ... k$ and $n$ sets of random initial weight vectors $\w_j$, the error back-prop training results in $kn$ networks 
\[
\{N(\a_i, \w_j) \;|\; i=1, 2, ... , k, j=1, 2, ..., n\} .
\]
Error-backprop locally and numerically minimizes the fitting error $f_{i,j}$ of $N(\a_i, \w_j)$ on the fitting set $F$.  

\subsection{Abstraction Errors}

The effect of abstraction error can be considered as a lack of the degree of abstraction.    

One effect is the genome, or the developmental program.  As monkeys do not have a human vocal tract to speak human languages and their brains are not as large as human brains, monkeys cannot abstract using human languages.   

Another effect is age.  If one's  learning is not sufficient (e.g., too young), a human child's brain network has not learned the required abstract concepts.   Such abstract concepts include where, what, scale, many other concepts one learns in school, as well as lifetime concepts such as better education leads to a more productive life.   Therefore, a
young child cannot do well for jobs that require a human adult.   We call the error as post error, since it is the error after (i.e., post) a certain amount of training.  

\begin{figure}[h]
	\centering
	\includegraphics[width=1.0\linewidth]{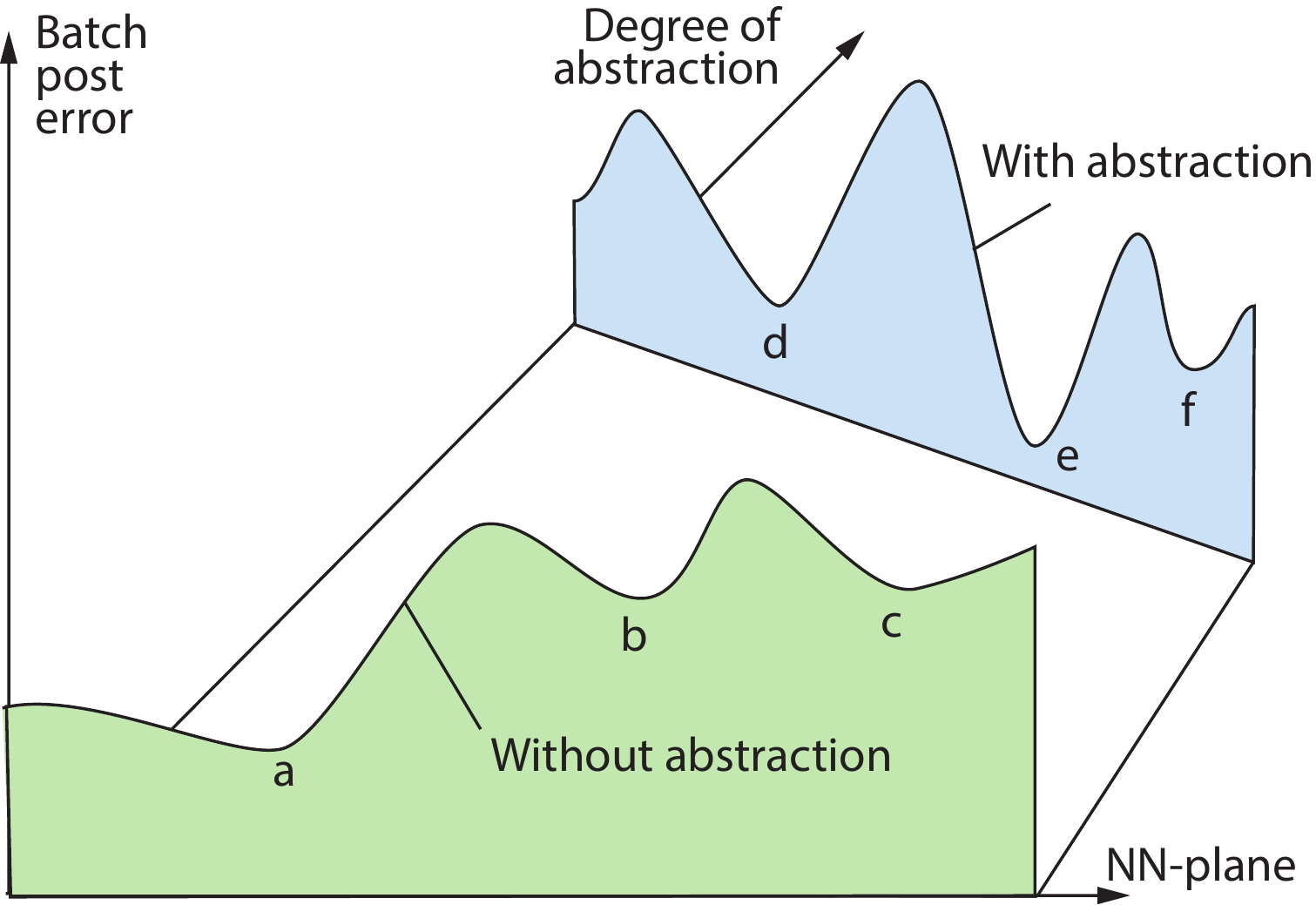}
	\caption{The effect of degree of abstraction by a neural network.  The horizontal axis indicates the possible value of the parameters of a neural network, denoted as NN-plane.  The 1-D here corresponds to the 60-million dimension in \cite{Krizhevsky17}.  The vertical axis is the batch post-selection error of the corresponding trained network.  }
	\label{FG:PSUTS-1D-Abs-v-NN-plane}
\end{figure}

Fig.~\ref{FG:PSUTS-1D-Abs-v-NN-plane} gives a 1D illustration for the effect of abstraction.   If the architecture of a neural network is inadequate
e.g., pure classification through data fitting in Eq.~\eqref{EQ:XtoLmapping}, the manifold of post error corresponds to that of ``without abstraction'' (green) in Fig.~\ref{FG:PSUTS-1D-Abs-v-NN-plane}.   Different 
positions along the horizontal axis (NN-plane) correspond to different parameters of the same type of neural networks.   The lowest point on the green manifold is labeled ``a'', but as we will see below, the smallest post error is missed by all error backprop methods since it typically does not coincide with a pit in the training data set.   This is true because the test set $T$ and the fitting set $F$ are disjoint, but the fitting error is based on the fitting set $F$ but the post error is based on the test set $T$.

In Fig.~\ref{FG:PSUTS-1D-Abs-v-NN-plane}, the blue manifold is a better than the green manifold, because the lowest point ``e'' on the blue manifold is lower than the lowest point ``a'' on the green manifold.  They correspond to different mapping parameter vector definitions for $\a$.  For example, the green manifold and the blue manifold correspond to Eq.~\eqref{EQ:XtoLmapping} and Eq.~\eqref{EQ:XZmapping}, respectively.  

Note that Fig.~\ref{FG:PSUTS-1D-Abs-v-NN-plane} only considers batch post errors in batch learning, but Eq.~\eqref{EQ:XZmapping} and Eq.~\eqref{EQ:XYZmapping} deal with incremental learning.

Given a defined architecture parameter vector $\a$, each searched $\a_i$ as a guessed $\a$ will also give a very different manifold in Fig. ~\ref{FG:PSUTS-1D-Abs-v-NN-plane}, where for simplicity, the manifold is drawn as a line.   In general, the worse a guessed $\a_i$ is, the higher the corresponding position on the  manifold but the amounts of increase at different points of the manifold are not necessarily the same since the manifold depends also on the test set $T$. 

From this point on, we assume that the architecture parameters $\a$ have been pre-defined as the so-called hyper-parameter vector, but their vector 
values are unknown.  The components in $\a$ may include the number of layers and the receptive field size of each layer.  But we need to realize that age, environment, and teaching experience greatly change
the landscape in Fig. ~\ref{FG:PSUTS-1D-Abs-v-NN-plane}, as we discussed in Sec.~\ref{SE:4Maps}.  

\subsection{Validation and Test Errors}

\begin{figure*}
	\centering
	\includegraphics[width=0.7\linewidth]{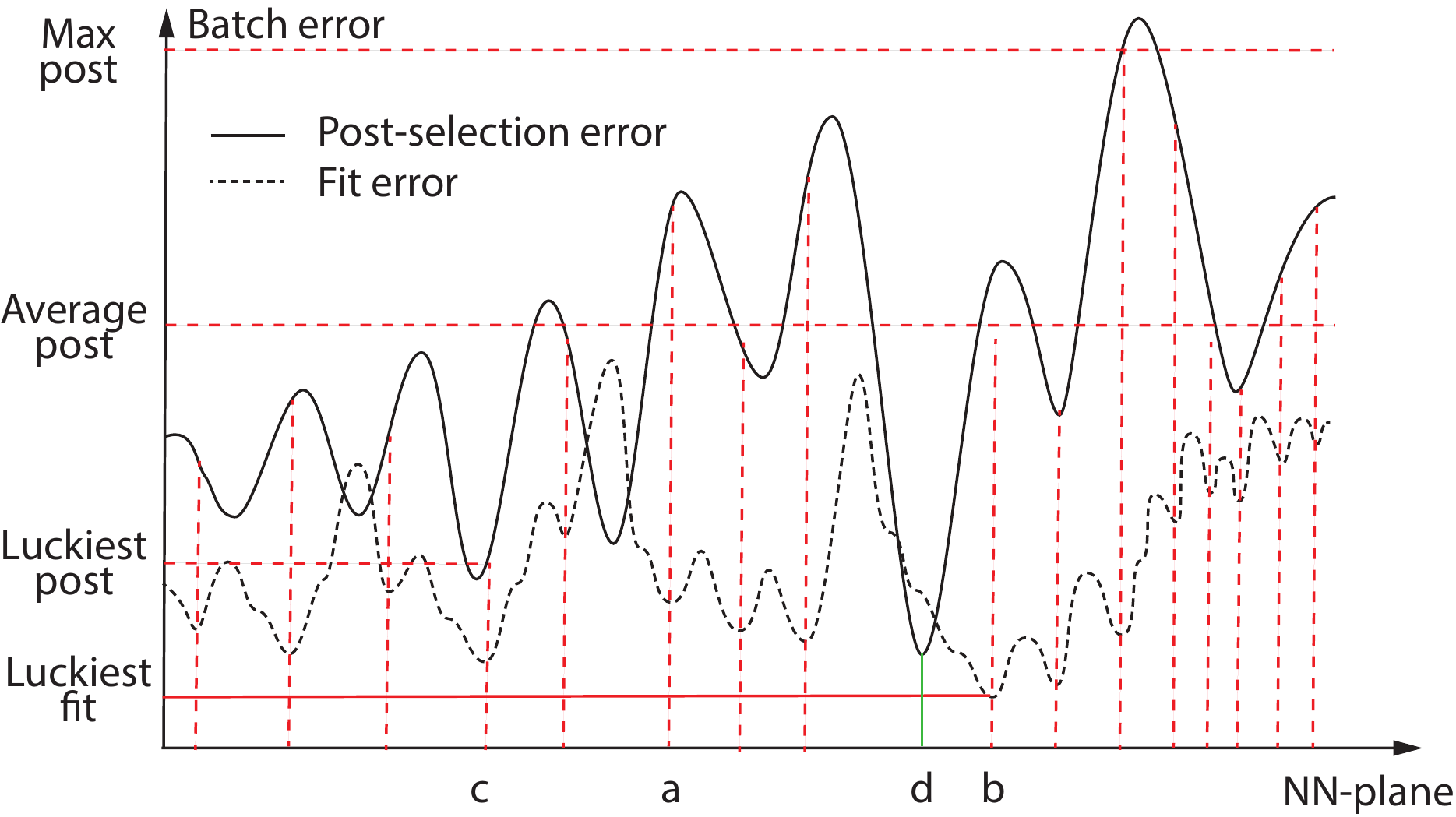}
	\caption{A 1D-terrain illustration for the fitting error (dashed curve) from the fitting data set and the post-selection error (solid curve) from the test data set.  The luckiest post-selection error $d$ is missed because it is not near a pit of dashed curve.   In the misconduct, only the luckiest post at point $c$ is reported, but at least the average post error and the maximum post error should also be reported.  The test data set was unethically used to find the luckiest post in the training stage.}
	\label{FG:PSUTS-Fit-Error-1D}
\end{figure*}

Suppose we train a ``Deep Learning'' neural network $N(\a_i, \w_j)$ using error-backprop or reinforcement learning (a local gradient-based method), starting from $\a_i$ and $\w_j$.   

The following explanation of the two misconducts is also in layman’s terms but is more precise.   The so-called ``Deep Learning'' technique is like finding a location of many ``Neural Network'' balls using ``oil wells'' data.  

An oil well is a drill hole boring in the Earth that is designed to bring petroleum oil hydrocarbons to the surface.   We use the term ``oil well'' to indicate that, like drilling ``oil wells'', it is costly to collect and annotate data.  Like ``oil wells'', any data set $D$ is always very sparse on the NN-plane.  

Each ``oil well'' data contains a location on the NN-plane and an error height (how good the ``oil well'' is or how well the neural network at the NN-plane location fits the corresponding data that were used to construct the terrain).  All available ``oil wells'' data are divided into two disjoint sets, a ``fit'' set and a so-called ``test'' set $P=V\cup T$.  

The training stage of the technique has two steps, the ``fitting'' step and the ``post-selection'' step.  The fitting step uses the ``fit'' set $F$.  The ``post-selection'' step uses 
the so-called ``test'' set $P=V\cup T$, but this is wrong because the second step of the training stage must not use the so-called ``test'' set $T$.  Consequently, almost all ``Deep Learning'' techniques have not been tested at all, as Theorem~\ref{TM:without} below will establish. 

A real-world plane has a dimension of 2, but a Neural Network plane, NN-plane, has a dimension of at least millions, corresponding to millions of parameters to be learned by each ``Neural Network'' ball.  (The original ``Deep Learning'' paper
\cite{Krizhevsky17} has a dimensionality of 200B.) The ``fit'' set and ``test'' set correspond to two heights at each of all possible locations on the ``NN'' plane, called, respectively, the ``fit'' error and post-selection error that was miscalled ``test error'' by \cite{Krizhevsky17}.  Although ``oil wells'' are costly, the more  Neural Network balls technique tries, the better chance to find a lucky ball whose final location has a low post-selection error.  

In the ``fitting'' step, the technique drops many balls to many random locations of the NN-plane, typically many more than the dimension of the NN-plane. From random locations that the balls landed at, all the balls automatically roll down (according to its height or another artificial ``reward'') until they get stuck in a local pit and then they stop.
 
In Fig.~\ref{FG:PSUTS-Fit-Error-1D}, the NN-plane is illustrated as a horizontal line.   Balls roll down on the dashed-line terrain.  If we drop only one ball, it might stop at location $a$ whose fit height is mediocrely low.  If we drop three balls, they may stop at locations $a$, $b$ and $c$, respectively.  If we drop even more balls, we assume that all the vertical dashed lines have at least one
ball that stopped.  The fitting step missed location $d$, the lowest post-selection error possible, because it is not in a pit. 

In the Post-Selection step: record the so-called ``test'' height of every ball at its stopped NN-plane location.  Misconduct 2 means to ``post-select'' the luckiest ball whose ``test'' height is the lowest among all randomly tried balls.  Only this luckiest ball was reported to the public.  Misconduct 1: All less lucky balls are discarded because their ``test'' heights look bad. 

In Fig.~\ref{FG:PSUTS-Fit-Error-1D}, so-called test height is indicated by solid-line terrain but it should be called post-selection error instead.   Generally, the solid-line terrain can cross under the dashed-line terrain, but it is unlikely at a pit (see $d$) because (1) the Fit procedure greedily fits the 
fitting set $F$ using error-backprop but does not fit $V$ or $T$, (2) $F$, $V$ and $T$ all have many samples.  

Because the ``post-selection'' step is within the training stage and the ``test'' data are all used in the training stage, this technique corresponds to Misconduct 2 (test on training data).  Although all the balls have not ``seen'' the so-called ``test'' data when they roll down a hill, they all have ``seen'' the ``test'' data during the post-selection step of the training stage.  

The reported luckiest ball is not generalizable to a new test due to the two alleged misconducts, Misconduct 1: the technique hides many random networks that are bad; Misconduct 2:  the technique cheats: the miscalled ``test'' error is actually a ``training'' error.

Fig.~\ref{FG:PSUTS-Fit-Error-1D} indicates post-selection errors as horizontal dashed lines.  We can see that 
at least the maximum post-selection error (Max post) and average post-selection error (average post) should be reported, not just the luckiest post-selection error (luckiest post).  The $k$-fold cross-validation protocol 
\cite{DudaHartStork} further requires that the roles of the fit set and the test set be switched by dividing all available data $D$ into $k$ folds of disjoint subsets.  

Because the test set was used in the training stage, 
Fig.~\ref{FG:PSUTS-Fit-Error-1D} corrects the so-called ``test'' errors as post-selection errors. 

We define a simple system that is easy to understand for our discussion to follow. 
Consider a highly specific task of recognizing patterns inside the annotated windows in Fig.~\ref{FG:ImageNet-Annotation}.  This is a simplified case of the three tasks---recognition (yes or no, learned patterns at varied positions and scales), detection (presence of, or not, learned patterns) and segmentation (of recognized patterns from input).  These three tasks of natural cluttered scenes were dealt with, for the first time, by  
the first ``Deep Learning'' network for 3D---Cresceptron~\cite{WengCresIJCV97}.  

Cresceptron only trains one network for each task.  But later  ``Deep Learning'' networks train multiple networks for each task and then use Post-Selection to select fewer networks.  Below, by ``Deep Learning", we mean such 
Post-Selection based networks.  Later data sets like ImageNet \cite{Russakovsky15} contain many more image samples but we will see below that all ``Deep Learning" networks simply fit the training data, validation data and test data and are without a test at all.
\begin{figure}[tb]
	\centering
	\includegraphics[width=1.0\linewidth]{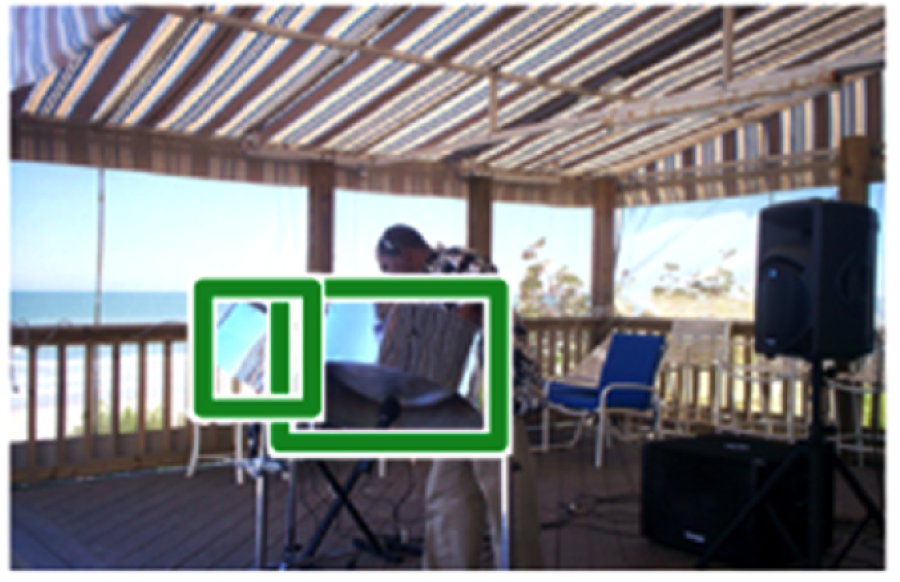}
	\caption{
	ImageNet-like annotation.  Two annotated windows as two training samples in each cluttered image. 
The ImageNet competitions extend to more positions and scales than such windows.  Courtesy of \cite{Russakovsky15}.
	}
	\label{FG:ImageNet-Annotation}
\end{figure}

\subsection{Pure-Guess Nearest Neighbor}  
\cite{WengMisleadAIEE23} proposed a Nearest Neighbor With Threshold (NNWT) method to establish that such a simple
classifier beats all Post-Selection based ``Deep Learning'' methods since it satisfies even a zero-requirement on 
both the validation error and the test error, using the two misconducts.
Here, to be clearer, the following new PGNN is without the threshold. 

\begin{definition}[Pure-Guess Nearest Neighbor, PGNN]
\label{DF:NN}
PGNN method stores all available data $D$, the fit set $F$, the validation set $V$ and the miscalled test set $T$. 
To deal with the ImageNet Competitions in Fig.~\ref{FG:ImageNet-Annotation}, the method uses the window-scan method  probably first proposed by Cresceptron \cite{WengCresIJCV97}.  Given the query input from every scan window, PGNN finds its nearest-neighbor sample in $D$ and outputs the stored label. PGNN perfectly fits $F$. 
For samples in $V$ and $T$, PGNN randomly and uniformly guesses a label using Post-Selection and stores the guessed label. 
\end{definition}  
 
From a fit set $F$ and a Post set $P=V\cup T$, the PGNN algorithm is denoted as $G(F, P, e)$, where $e$ is a seed for a pseudo-random number.  

The training stage of PGNN:  

The 1st step  $\mbox{Fit}(F, B)$: Store the entire fitting set $F=\{ (\s, l)\}$ into database $B$, where $\s$ and $l$ are the normalized sample and the label from the corresponding annotated window $\w$, respectively.  The window scan has a pre-specified position range for row and column $(r, c)$ of a scan window, and a pre-specified range for the scale of the window.   The window scan tries all the locations and all the scales in the pre-specified ranges.  For each window $\w$, Fit crops the image at the window and normalizes the cropped image into 
a standard sample $\s$.  All standard samples in $B$ have the same dimension as a vector in row-major storage.  

The 2nd step $\mbox{Post}(P, L, e, B)$:  From every query image $\q\in P$, for every scan window $\w$ for $\q$, compute its standard sample $\s$.  If $\s$ is new, guess a label $l$ for $\s$ to generate $(\s, l)$, where $l$ is randomly sampled from $L$ using a uniform distribution, identically independently distributed.   Store $(\s, l)$ into database $B$.  While $\mbox{Post}(P, L, e, B)$ is not good enough on $P$, run $\mbox{Post}(P, L, e, B)$ using the returned new seed $e$.  

Each run of Post corresponds to a new ``Deep Learning'' network where each network starts from a new random set of weights and a new set of hyper parameters. 

The performance stage of PGNN:  

$\mbox{Run}(P,B)$: For every query image $\q\in P$, for every scan window $\w$ for $\q$, compute its standard sample $\s$.  Find its nearest sample $\s^* $ from $B$, output the stored label $l$ associated with the nearest neighbor $\s^*$.

PGNN uses a lot of space and time resources for over-fitting $F$ and $P$.  It randomly guesses labels for $P=V\cup T$ until all the guesses are correct.  Therefore, it satisfies the required error for $V$ and $T$, as long as the human annotation is consistent.    PGNN here is slower than NNWT in \cite{WengMisleadAIEE23} which interpolates from samples in $F$ until the distance is beyond the
threshold (a hyper-parameter).  But PGNN is simpler for our explanation of misconduct since it drops the threshold in NNWT.  

To understand why Post-Selections is misconduct that gives misleading results, let us derive the following important theorem.
\begin{theorem}[PGNN Supremacy]
\label{TM:Supremacy}
Given any validation error rate $e_v\ge 0$ and test error rate $e_t\ge 0$, using Post-Selections the {\rm PGNN} classifier satisfies any required $e_v$ and $e_t$, if the author is in the possession of the test set $P=V\cup T$  and both the storage space and the time spent on the Post-Selections are finite but unbounded, if the Post-Selection is allowed.
\end{theorem}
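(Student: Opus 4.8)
The plan is to split the argument into a deterministic claim about the performance stage and an essentially trivial probabilistic (or purely existential) claim about the Post-Selection loop. First I would analyze $\mbox{Run}(P,B)$. For every query image $\q\in P$ and every scan window $\w$, the normalized standard sample $\s$ recomputed in the performance stage is literally one of the vectors already stored in $B$: it was inserted by $\mbox{Fit}(F,B)$ if that same vector occurred in the fitting set, and otherwise by the training-stage call $\mbox{Post}(P,L,e,B)$, which had access to exactly this $P$ and the same window-scan ranges. Hence the nearest neighbor of $\s$ in $B$ is $\s$ itself, at distance $0$. If several stored vectors coincide with $\s$ they carry the same label: any two that came from annotated windows agree because the human annotation is assumed consistent, and a guessed label is, by construction, attached to a vector that was new at insertion time. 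So $\mbox{Run}(P,B)$ returns, for each standard sample of each query in $P$, precisely the label held by $B$ at the end of training. (PGNN outputs one label; reading a top-$m$ requirement with $m\ge 1$ only makes the target easier, so $m=1$ suffices.)

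It follows that PGNN's validation and test error rates are determined entirely by the stored labels: a standard sample of $V$ (resp.\ of $T$) is an error iff its stored label differs from the human annotation. Let $N_V$, $N_T$ be the numbers of distinct standard samples produced by scanning $V$ and $T$, and let $n_V\le N_V$, $n_T\le N_T$ count those not already present in $F$; the samples already in $F$ carry the correct $F$-label (again by consistency) and never contribute an error. It remains to exhibit one run of $\mbox{Post}$ whose $n_V+n_T$ guessed labels produce at most $\lfloor e_v N_V\rfloor$ wrong $V$-samples and at most $\lfloor e_t N_T\rfloor$ wrong $T$-samples.

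For this, note that in a single call $\mbox{Post}(P,L,e,B)$ the guessed labels are i.i.d.\ uniform on the finite label set $L$, with $c=|L|$. Let $A$ denote the event that the guessed assignment meets the two bounds above. The assignment that guesses every label equal to the human annotation lies in $A$ --- it yields zero validation and zero test error, hence satisfies any $e_v\ge 0$, $e_t\ge 0$ --- and it has probability $(1/c)^{\,n_V+n_T}=:p>0$ because all the counts are finite. Thus $\Pr[A]\ge p>0$. The Post-Selection loop keeps re-running $\mbox{Post}$ with fresh independent seeds until the result is ``good enough on $P$'', i.e.\ until $A$ occurs, so the number $R$ of runs is stochastically dominated by a geometric random variable of parameter $p$: $R<\infty$ almost surely, with expected value at most $1/p<\infty$ --- or, deterministically, such a seed simply exists. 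Each run takes finite time and the database size is the finite quantity $|F|+n_V+n_T$ (plus bookkeeping), so only the ``finite but unbounded'' resources permitted by the theorem are used. When the loop halts, $A$ holds, and by the first observation PGNN's validation error is $\le e_v$ and its test error is $\le e_t$.

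The one place that needs care is the first step: establishing that the performance-stage nearest neighbor is the stored vector itself and that its label is unambiguous despite possible coincidences among $F$-, $V$- and $T$-samples and among repeated scan windows. This is exactly where the hypothesis that the author possesses $P=V\cup T$ is used --- it is what permits $\mbox{Post}$ to insert the $P$-vectors together with their guessed labels into $B$ --- and where ``consistent human annotation'' is needed. After that, the conclusion is a one-line positive-probability argument, with the ``finite but unbounded'' clause absorbing both the finite storage for $P$ and the almost-surely finite (but unbounded over requirements) number of re-seeded Post runs.
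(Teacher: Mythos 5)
Your proof is correct and follows essentially the same route as the paper's: the key step in both is that a single run of $\mbox{Post}$ produces the all-correct guess with positive probability $(1/|L|)^{k}$, so independent re-seeded runs succeed within a finite (almost surely, geometrically distributed) number of trials. Your additional opening step --- verifying deterministically that $\mbox{Run}(P,B)$ returns the stored labels because each query's nearest neighbor in $B$ is itself at distance zero --- is a useful piece of care that the paper's proof leaves implicit, but it does not change the underlying argument.
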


\begin{proof}
Because the number of seeds to be tried 
during the Post-Selection is finite but unbounded, we can prove that there is a finite time at which a lucky seed $s$ will produce a good enough verification error and test error.  
Although the waiting time is long, the time is finite because $V$ and $T$ are finite.  Let us formally prove this.  Suppose $l$ is the number of labels in the output set $L$.  For the set of queries in $V$ and $T$, there are $k$ (constant) outputs that must be guessed.  The probability for a single guess to be correct is $1/l_0$, $l_0=\| L \|$, due to the uniform guess in $L$.  The probability for $k$ guesses to be all correct 
is  $(1/l_0)^k=1/l_0^k$ because guesses are all mutually independent.  The probability to guess at least one label wrong is 
$1- 1/l_0^k$, with $0< 1- 1/l_0^k< 1$.  The probability for as many as $n$ runs of $\mbox{Post}$, all of which do not satisfy the $e_v =0$ and $e_t=0$, is
\[
p(n) = (1- 1/l_0^k)^n \longrightarrow 0, 
\]
as $n$ approaches infinity, because $0< 1- 1/l_0^k< 1$.  Therefore, within a finite time span, a process of trying incrementally more networks using Post will get a lucky network that satisfies both the required $e_v $ and $e_t $.   This is the luckiest network from the Post-Selection.
\end{proof}

Theorem~\ref{TM:Supremacy} has established that Post-Selections can even produce a superior classifier that gives any required validation error and any test error, including zero-value requirements! Yes, while the test sets are in the possession of authors, the authors could show any superficially impressive validation error rates and test error rates (including even zeros!) because they used Post-Selections without a limit on resources (to store all data sets and to search for the luckiest network).
It is of course time consuming for a program to search for a network whose guessed labels are 
good enough. But such a lucky network will eventually come within a finite time frame!   

\subsection{Absence of Test}

Does the Post-Selection step belong to the training stage?

\begin{theorem}[Post-Selection]
\label{TM:Post-Selection}
Between the two stages, training and test, the Post-Selection step that selects $m$ required networks from $n>m$ networks (e.g., $m=5$ and $n=10000$) belongs to the training stage.
\end{theorem}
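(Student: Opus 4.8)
\textit{Proof proposal.} The plan is to prove the statement by elimination, taking Definition~\ref{DF:2stages} as the only structural input: the development of a classification system is divided into exactly two stages, and each stage carries a defining property. Since the partition is exhaustive, the Post-Selection step must lie in one of the two stages; I would rule out the test stage and thereby conclude that it lies in the training stage.

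First I would read off the two defining properties from Definition~\ref{DF:2stages}. Property~(i): the training stage must \emph{terminate with a completely trained system} $f$ that, on any fresh input $\q$ not in the trainer's possession, returns its top-$m$ labels. Property~(ii): the test stage \emph{takes such a completely trained system as given} and is fed only queries $\q$ that are not in the possession of the human trainer.

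Next I would record two elementary observations about Post-Selection. (a) Before Post-Selection runs there are $n>m$ candidate networks, so there is \emph{no single completely trained system yet}; cutting the $n$ candidates down to the $m$ retained ones is precisely the act of manufacturing the object that property~(i) requires as the \emph{output} of training. (b) To rank the $n$ candidates, Post-Selection must score each of them on some data set, and in the questioned practice this is the miscalled ``test'' set $P=V\cup T$, which is by construction \emph{in the possession of the author} --- otherwise the author could not carry out the ranking. This directly violates property~(ii). Hence Post-Selection cannot belong to the test stage, and since the two-stage partition is exhaustive, it belongs to the training stage; equivalently, the test stage cannot even begin until Post-Selection has delivered the $m$ networks.

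The main obstacle I anticipate is not the logic but fixing the meaning of ``belongs to a stage'' tightly enough to block the two obvious escape routes. The first route --- declaring Post-Selection a nameless third stage --- I would close by insisting on the wording of Definition~\ref{DF:2stages} that development ``is divided into two stages,'' i.e., a genuine partition with no residue, so that every step is attributed to one of the two. The second route --- pretending the scoring set used in Post-Selection is an ``independent'' test set --- I would close with the remark already made after Eq.~\eqref{EQ:disjoint}: once the trainer possesses $T$, the sets $V$ and $T$ become operationally equivalent, so using $T$ inside Post-Selection is indistinguishable from using $V$, which is indisputably a training-stage resource. With these two semantic points nailed down, the elimination argument is airtight.
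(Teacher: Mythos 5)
Your proposal is correct and its core step --- observation~(a), that before Post-Selection there is no single completed system returning only top-$m$ labels, contradicting what Definition~\ref{DF:2stages} demands of the training stage --- is exactly the paper's own proof, merely recast as elimination over the two-stage partition rather than as an explicit \emph{reductio}. Your observation~(b), that Post-Selection requires possession of the scoring set and so violates the test stage's independence requirement, is a sound additional argument but is not part of the paper's proof of this theorem (the paper reserves that point for Theorem~\ref{TM:without} and the discussion following Eq.~\eqref{EQ:disjoint}).
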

\begin{proof}
Let us prove by contradiction using Definition~\ref{DF:2stages}.   We hypothesize that the conclusion is not true, then the Post-Selection step that post-selects from $n>m$ networks belongs to the test stage.  Then in the absence of the Post-Selection step, after being given any query $\q$, the training stage is not able to produce only top-$m$  labels, but instead $n-m>0$ labels than required.  This is a contradiction to Definition ~\ref{DF:2stages}.  This means that the conclusion is correct.
\end{proof}

\begin{theorem}[Without test]
\label{TM:without}
Different from Cresceptron which trains only one network, a ``Deep Learning'' method that trains $n\ge 2$ networks and uses the so-called test set $T$ in the Post-Selection step to down-select $m <n$ networks from $n$ networks is without a test stage.
\end{theorem}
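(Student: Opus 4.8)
The plan is to argue by contradiction, leaning on Definition~\ref{DF:2stages} together with Theorem~\ref{TM:Post-Selection}. First I would suppose, for contradiction, that such a ``Deep Learning'' method --- one that trains $n\ge 2$ networks and consults the so-called test set $T$ inside the Post-Selection step --- does possess a genuine test stage. By Definition~\ref{DF:2stages}, a test stage presupposes that the system handed over at the end of the training stage is already \emph{completely trained}: given any fresh query $\q$ not in the trainer's possession it must output exactly the top-$m$ labels, and (as noted just after Eq.~\eqref{EQ:disjoint}) the test set must not be in the possession of the trainer, for otherwise $V$ and $T$ collapse into the same role.

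Next I would invoke Theorem~\ref{TM:Post-Selection}: the Post-Selection step that down-selects $m$ networks out of $n>m$ belongs to the \emph{training} stage, not the test stage. Hence whatever data the Post-Selection step consumes is data consumed during training. Since by hypothesis the method feeds $T$ into the Post-Selection step, $T$ is used during the training stage and is therefore in the trainer's possession while training --- contradicting the requirement, extracted above from Definition~\ref{DF:2stages}, that the test set not be possessed by the trainer. Moreover the partition $D=F\cup V\cup T$ is exhausted by the training stage ($F$ in the fitting step, $V$ and $T$ in the Post-Selection step), so no subset disjoint from all training-stage data survives to play the role of a test set. The assumed test stage therefore cannot exist, which is exactly the assertion ``without a test stage''.

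Finally I would dispatch the ``Different from Cresceptron'' clause: with a single trained network one has $n=1$, so there is nothing to down-select and the Post-Selection step is vacuous; $T$ is never touched before the performance stage, and a bona fide test stage on the disjoint set $T$ survives. I expect the only delicate point to be fixing, once and for all, what ``having a test stage'' means --- namely, possessing a set that is disjoint from everything used in training \emph{and} unavailable to the trainer, on which the reported error is measured. Once that reading is pinned down, the result is an immediate composition of Theorem~\ref{TM:Post-Selection} with Definition~\ref{DF:2stages}, with no calculation required.
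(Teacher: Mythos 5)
Your proof is correct and rests on exactly the same key step as the paper's: invoking Theorem~\ref{TM:Post-Selection} to conclude that $T$ is consumed during the training stage, hence no disjoint test stage remains. The paper's own proof is a single sentence making precisely this deduction; your contradiction framing, the appeal to the non-possession requirement after Eq.~\eqref{EQ:disjoint}, and the treatment of the Cresceptron clause are just a more elaborated presentation of the same argument.
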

\begin{proof}
This is true because $T$ is already used in the training stage according to Theorem~\ref{TM:Post-Selection}.   
\end{proof}

The above theorem reveals that almost all so-called ``Deep Learning'' methods cited in this paper, including more in \cite{WengNatureReport21,WengScienceReport21}, in the way they published, were not tested at all. 
The basic reason is that the so-called test set $T$ was used in the training stage.
Because ``Deep Learning'' is not tested, the technique is not trustable.  

 A published so-called ``Deep Learning'' paper \cite{GaoBEAN21} claimed to use an average ``test'' error during the Post-Selection step of the training stage. It reported a drastically worse performance, 12\% average error on the MNIST data set instead of 0.23\% error that uses the luckiest (MNIST website). 
 12\% is over 52 times larger than 0.23\%.    \cite{GaoBEAN21} still contains Misconduct 2: The average is only across a partial dimensionality of the NN-plane $\w$ but other remaining dimensionality $\a$ of the NN-plane till uses the ``luckiest''.  This quantitative information supports that so-called ``Deep Learning'' technology is not trustable in practice.
 
Therefore, the published ``Deep Learning'' methods cheated and hid.
``Deep learning'' tested on a training set as \cite{DudaHartStork} warned against but miscalled the activities as ``test'' and deleted or hid data that looked bad. 

\section{\uppercase{Conclusions}}
\label{SE:conclusions}

The simple Pure-Guess Nearest Neighbor (PGNN) method beats all ``Deep Learning'' methods in terms of the superficial errors using the same misconduct.  
Misconduct in ``Deep Learning'' results in performance data that are misleading.  Without a test stage, ``Deep Learning'' is not generalizable and not trustable.   Such misconduct is tempting to those authors where the test sets are in the possession of the authors and also to open-competitions where human experts are not explicitly disallowed to interact with the ``machine player'' on the fly.  This paper presents 
scientific reasoning based on well-established principles---transparency and cross-validation.  It does not present detailed evidence of every charged paper in \cite{WengNatureReport21,WengScienceReport21}. More detailed evidence of such misconduct is referred to Weng et al. v. NSF et al. U.S. West Michigan District Court case number 1:22-cv-998.   

The rules of ImageNet \cite{Russakovsky15} and many other competitions seem to have encouraged the Post-Selections discussed here.   Even if the Post-Selection is banned, any comparisons without an explicit limit on, or an explicit comparison about, storage and time spent are meaningless. ImageNet \cite{Russakovsky15} and many other competitions did not ban Post-Selections, nor did they limit or compare storage or time.  

The Post-Selection problem is among the 20 million-dollar problems solved conjunctively by this author \cite{Weng20M22}.  Since such a fundamental problem is intertwined with other 19 fundamental problems for the brain, it appears that one cannot solve the misconduct problem (i.e., local minima) without solving all the 20 million-dollar problems altogether.

\bibliographystyle{apalike}
{\small \bibliography{shoslifref}}

\end{document}